\documentclass[conference,10pt,letterpaper]{IEEEtran}

\usepackage{cite}

\usepackage{stfloats}

\ifCLASSINFOpdf
 \usepackage[pdftex]{graphicx}
\else
\fi
\usepackage[cmex10]{amsmath}
\usepackage{array}
\usepackage{mdwmath}
\usepackage{mdwtab}
\usepackage{eqparbox}
\usepackage{tikz}

\newcommand{\pc}{\mathbf{P}}

\newcommand{\pp}{\mathbb{P}}

\usepackage{amssymb}
\usepackage{amsthm}
\usepackage{subfig}
\usepackage{latexsym}
\usepackage{inputenc}
\usepackage{url}
\usepackage{euler}
\usepackage{amsmath}
\usepackage{amssymb}
\usepackage[inline]{asymptote}
\usepackage{tabularx}

\newtheorem{theorem}{Theorem}
\newtheorem{definition}{Definition}
\newtheorem{proposition}{Proposition}

\hyphenation{op-tical net-works semi-conduc-tor}
\pagenumbering{off}

\begin{document}

\title{Pure Rough Mereology and Counting}
\author{\IEEEauthorblockN{A. Mani}
\IEEEauthorblockA{Department of Pure Mathematics\\
University of Calcutta\\
9/1B, Jatin Bagchi Road\\
Kolkata(Calcutta)-700029, India\\
Email: {a.mani.cms@gmail.com}\\
Homepage: \url{http://www.logicamani.in}}}


\maketitle

\begin{abstract}
The study of mereology (parts and wholes) in the context of formal approaches to vagueness can be approached in a number of ways. In the context of rough sets, mereological concepts with a set-theoretic or valuation based ontology acquire complex and diverse behavior. In this research a general rough set framework called granular operator spaces is extended and the nature of parthood in it is explored from a minimally intrusive point of view. This is used to develop counting strategies that help in classifying the framework. The developed methodologies would be useful for drawing involved conclusions about the nature of data (and validity of assumptions about it) from antichains derived from context. The problem addressed is also about whether counting procedures help in confirming that the approximations involved in formation of data are indeed rough approximations? 
\end{abstract}

\textbf{Keywords}: \begin{small}{Mereology, Parthood, General Granular Operator Spaces, Rough Mereology, Rough Membership Functions, AI, Rough Objects, Granulation, Contamination Problem }\end{small}
\footnote{IEEE-Xplore: WIECON-ECE'2017}

\section{Introduction}

Mereology is the study of parts and wholes and has been studied from philosophical, logical, algebraic, topological and applied perspectives. Almost every major philosophical approach has its approach to the basic questions related to parts and wholes in the context of knowledge, knowledge representation and world view. 

Rough set theory is a formal approach to vagueness and knowledge that involves a wide array of logico-algebraic, computational, mathematical and applied philosophical techniques. The study of mereology in the context of rough sets can be approached in at least two essentially different ways. In the approach aimed at reducing contamination by the present author\cite{AM240,AM3930}, the primary motivation is to avoid intrusion into the data by way of additional assumptions about the data. In the rough membership function based approach \cite{LP2011}, the strategy is to base definitions of parthood on the potential values of the function.   

One of the most general set-theoretic frameworks for granular rough sets has been developed by the present author in \cite{AM6999}. There it is shown by her that the granular framework is ideal for handling antichains formed by mutually distinct objects and related models. Other general set theoretic frameworks avoid granulation and impose more restrictions on approximation. In the present paper, generalized granular operator spaces (GOS) are introduced as a variant of the framework in \cite{AM6999}, the nature of parthood over \textsf{GOS} is investigated, the ontology of meaning associated explored and counting strategies are developed for drawing inferences about the nature of data at hand. The counting strategies refer antichains (derived from parthood) of rough objects, but can be extended to other parthoods in a natural way. These can be used for constructing models and also confirming whether approximations found in practice can possibly be explained from a rough set ontology/view.

\subsection{Background}

An \emph{Information System} $\mathcal{I}$, is a relational system of the form $\mathcal{I}\,=\, \left\langle S,\, \mathbb{A},\, \{V_{a} :\, a\in \mathbb{A}\},\, \{f_{a} :\, a\in \mathbb{A}\}  \right\rangle $
with $S$, $\mathbb{A}$ and $V_{a}$ being respectively sets of \emph{Objects}, \emph{Attributes} and \emph{Values} respectively. Information systems generate various types of relational or relator spaces which in turn relate to approximations of different types and form a substantial part of the problems encountered in general RSTs. 

In classical \textsf{RST}, equivalence relations of the form $R$ are derived by the condition $x,\, y\,\in\, S $ and $B\,\subseteq\, \mathbb{A} $, let $(x,\,y)\,\in\, R $ if and only if $(\forall a\in B)\, \nu(a,\,x)\,=\, \nu (a,\, y)$.  $\left\langle S,\,R \right\rangle $ is then an \emph{approximation space}. On the power set $\wp (S)$, lower and upper approximations of a subset $A\in \wp (S)$ operators, (apart from the usual Boolean operations), are defined as per: $A^l = \bigcup_{[x]\subseteq A} [x]$, $A^{u} = \bigcup_{[x]\cap A\neq \emptyset} [x]$, with $[x]$ being the equivalence class generated by $x\in S$. If $A, B\in \wp (S)$, then $A$ is said to be \emph{roughly included}\label{rin} in $B$, $(A\sqsubseteq B)$ if and only if $A^l \subseteq B^l \,\&$ $A^u\subseteq B^u$. $A$ is \emph{roughly equal} to $B$ ($A\approx B$) if and only if $A\sqsubseteq B$ and $B\sqsubseteq A$ (the classes of $\approx$ are rough objects). 

In rough sets, the objects of interest may be all types of objects, only rough objects of specific type or rough and exact objects of some types and corresponding to these the domains of interest would be the classical domain or rough domain or hybrid versions thereof respectively \cite{AM240}. Boolean algebra with approximation operators forms a classical rough semantics. This fails to deal with the behavior of rough objects alone. The scenario remains true even when $R$ in the approximation space is replaced by arbitrary binary relations. In general, $\wp(S)$ can be replaced by a set with a parthood relation and some approximation operators defined on it as in \cite{AM240}. The associated semantic domain is the classical semantic domain for general RST. The domain of discourse associated with roughly equivalent sets in classical domain is a \emph{rough semantic domain}. Hybrid semantic domains, have also been used in the literature (see \cite{AM240}).

The contamination problem is the problem of reducing confusion among concepts from one semantic domain in another during the process of construction of semantics. The use of numeric functions like rough membership and inclusion functions based on cardinalities of subsets is one source of contamination. The rationale can also be seen in the definition of operations like $\sqcup$ in the definition of pre-rough algebra (for example) that seek to define interaction between rough objects but use classical concepts that do not have any interpretation in the rough semantic domain. Details can be found in \cite{AM3600,AM3930}.

\section{General Granular Operator Spaces}

\begin{definition}
A \emph{General Granular Operator Space} (\textsf{GOS}) $S$ is a structure of the form $S\,=\, \left\langle \underline{S}, \mathcal{G}, l , u, \pc \right\rangle$ with $\underline{S}$ being a set, $\mathcal{G}$ an \emph{admissible granulation}(defined below) over $S$, $l, u$ being operators $:\wp(\underline{S})\longmapsto \wp(\underline{S})$ and $\pc$ being a definable binary generalized transitive predicate (for parthood) on $\wp(\underline{S})$ satisfying the same conditions as in Def.\ref{gos} except for those on admissible granulations (Generalized transitivity can be any proper nontrivial generalization of parthood (see \cite{AM9501}). $\pp$ is  proper parthood (defined via $\pp ab$ iff $\pc ab \,\&\,\neg \pc ba$) and $t$ is a term operation formed from set operations):

\begin{align*}
(\forall x \exists
y_{1},\ldots y_{r}\in \mathcal{G})\, t(y_{1},\,y_{2}, \ldots \,y_{r})=x^{l} \\
\tag{Weak RA, WRA} \mathrm{and}\: (\forall x)\,(\exists
y_{1},\,\ldots\,y_{r}\in \mathcal{G})\,t(y_{1},\,y_{2}, \ldots \,y_{r}) =
x^{u},\\
\tag{Lower Stability, LS}{(\forall y \in
\mathcal{G})(\forall {x\in \wp(\underline{S}) })\, ( \pc yx\,\longrightarrow\, \pc yx^{l}),}\\
\tag{Full Underlap, FU}{(\forall
x,\,y\in\mathcal{G})(\exists
z\in \wp(\underline{S}) )\, \pp xz,\,\&\,\pp yz\,\&\,z^{l} = z^{u} = z,}
\end{align*}
In the granular operator space of \cite{AM6999}, $\pc = \subseteq$, $\pp = \subset$ only in that definition), $\pp$ is  proper parthood (defined via $\pp ab$ iff $\pc ab \,\&\,\neg \pc ba$) and $t$ is a term operation formed from set operations.
\end{definition}

On $\wp(\underline{S})$, if the parthood relation $\pc$ is defined via a formula $\Phi$ as per \begin{equation}\pc ab \text{ if and only if } \Phi(a, b),\end{equation} then the $\Phi$-rough equality would be defined via  $a\approx_\Phi b \text{ if and only if } \pc ab \,\&\pc ba$. In a granular operator space, $\pc = \sqsubset$ is defined by \begin{equation}a \sqsubset b \text{ if and only if } a^l \subseteq b^l \,\&\, a^u \subseteq b^u.\end{equation} The rough equality relation on $\wp(\underline{S})$ is defined via $a\approx b \text{ if and only if } a\sqsubset b  \, \&\,b \sqsubset a$. Regarding the quotient $\underline{S}|\approx$ as a subset of $\wp(\underline{S})$, the order $\Subset$ will be defined as per $\alpha \Subset \beta \text{ if and only if } \Phi(\alpha, \beta)$ Here, $\Phi(\alpha, \beta)$ is an abbreviation for $(\forall a\in \alpha, b\in \beta)\Phi(a, b) $. $\Subset$ will be referred to as the \emph{basic rough order}. 

\begin{definition}
By a \emph{roughly consistent object} will be meant a set of subsets of $\underline{S}$ of the form  $H = \{A ; (\forall B\in H)\,A^l =B^l, A^u = B^u \}$. The set of all roughly consistent objects is partially ordered by the inclusion relation. Relative this maximal roughly consistent objects will be referred to as \emph{rough objects}. By \emph{definite rough objects}, will be meant rough objects of the form $H$ that satisfy 
\begin{equation}(\forall A \in H) \, A^{ll} = A^l \,\&\, A^{uu} = A^{u}. \end{equation} 
Other concepts of rough objects will also be used in this paper.
\end{definition}

\begin{proposition}
When $S$ is a granular operator space, $\Subset$ is a bounded partial order on $\underline{S}|\approx$. More generally it is a bounded quasi order.
\end{proposition}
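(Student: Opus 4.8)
The plan is to reduce everything to the two defining inclusions $a^l \subseteq b^l$ and $a^u \subseteq b^u$, exploiting the fact that every member of a single $\approx$-class has the same lower and upper approximation. So the first step is to record this uniformity: if $a \approx a'$ then, by the very definition of $\approx$, $a^l = a'^l$ and $a^u = a'^u$. Consequently, for classes $\alpha, \beta \in \underline{S}|\approx$ the defining condition $\alpha \Subset \beta$, namely $(\forall a\in\alpha, b\in\beta)\, a \sqsubset b$, holds for \emph{all} representative pairs exactly when it holds for \emph{some} pair (the classes being nonempty). This shows $\Subset$ is well defined on the quotient and lets me verify each order axiom on a single representative pair rather than on whole classes.

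Next I would check the three order axioms. Reflexivity of $\Subset$ is immediate from $a^l \subseteq a^l$ and $a^u \subseteq a^u$. Transitivity descends from transitivity of $\subseteq$: if $\alpha \Subset \beta$ and $\beta \Subset \gamma$, then choosing $a\in\alpha,\ b\in\beta,\ c\in\gamma$ gives $a^l \subseteq b^l \subseteq c^l$ and $a^u \subseteq b^u \subseteq c^u$, hence $a \sqsubset c$ and $\alpha \Subset \gamma$. For antisymmetry, $\alpha \Subset \beta$ together with $\beta \Subset \alpha$ forces $a^l = b^l$ and $a^u = b^u$ for representatives $a,b$, i.e. $a \approx b$, so $\alpha = \beta$. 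Together these establish that $\Subset$ is a partial order on $\underline{S}|\approx$.

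For boundedness I would exhibit the least and greatest elements as the classes $[\emptyset]$ and $[\underline{S}]$. Invoking the monotonicity of $l$ and $u$ inherited from the granular operator space conditions of Def.\ref{gos}, the inclusion $\emptyset \subseteq b$ yields $\emptyset^l \subseteq b^l$ and $\emptyset^u \subseteq b^u$ for every $b$, so $[\emptyset] \Subset \beta$ for all $\beta$; dually $a \subseteq \underline{S}$ yields $a^l \subseteq \underline{S}^l$ and $a^u \subseteq \underline{S}^u$, so $\alpha \Subset [\underline{S}]$ for all $\alpha$. Hence $\Subset$ is bounded, and the partial-order claim is complete.

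Finally, for the more general assertion, in an arbitrary \textsf{GOS} the parthood $\pc$ is only generalized transitive, so the induced $\approx_\Phi$ need not be a genuine equivalence and the element-wise constancy used above may fail; reflexivity, transitivity and the two bounds still survive, but the antisymmetry step can break down, leaving exactly a bounded quasi order. I expect the main obstacle to be precisely this last point: isolating which fragment of the argument is lost, namely the antisymmetry step, which rests jointly on $\approx$ being a true equivalence and on $l,u$ being constant on classes, so that the downgrade from bounded partial order to bounded quasi order is accounted for cleanly rather than asserted.
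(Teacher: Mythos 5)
The paper states this proposition \emph{without any proof at all}, so there is nothing of the author's to compare your argument against; what you have written fills a genuine omission rather than duplicating or diverging from an existing proof. Your treatment of the headline claim is correct and complete. The key observation --- that every member of an $\approx$-class has identical lower and upper approximations, so the universally quantified definition of $\Subset$ collapses to a condition on single representatives --- is exactly what makes the quotient construction work, and reflexivity, transitivity and antisymmetry then reduce to properties of $\subseteq$ as you say. Your bound argument is also sound: monotonicity ($a \subseteq b \longrightarrow a^l \subseteq b^l \,\&\, a^u \subseteq b^u$) is indeed among the granular operator space axioms of \cite{AM6999} that Def.~1 of this paper inherits, so $[\emptyset]$ and $[\underline{S}]$ are respectively the least and greatest classes under $\Subset$.

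The one soft spot is your final paragraph. For the ``more generally'' claim you assert that reflexivity, transitivity and the two bounds survive in an arbitrary \textsf{GOS} and that only antisymmetry is lost. The diagnosis of \emph{where} a partial order degrades to a quasi order is right, but the survival claims are not automatic: in a general \textsf{GOS} the parthood $\pc$ is only required to be a \emph{generalized} transitive predicate (which, per the paper, may be any proper nontrivial weakening of transitivity), so transitivity of $\Subset$ --- and likewise the existence of bounds --- needs either an additional hypothesis or an explicit restriction to the reflexive, transitive parthoods the paper actually works with (every parthood listed in the subsection on parthood in \textsf{GOS} except Lateral++ is of this kind). As written, ``still survive'' is an assertion rather than a proof; one sentence fixing the intended scope --- e.g., reading the general claim for reflexive, transitive $\pc$ whose defining formula $\Phi$ is monotone enough for $\emptyset$ and $\underline{S}$ to furnish bounds --- would make the quasi-order half of your argument as solid as the partial-order half.
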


In quasi or partially ordered sets, sets of mutually incomparable  elements are called \emph{antichains}. Some of the basic properties may be found in \cite{GG1998,koh}. Antichains of rough objects have been used by the present author for forming algebraic models in \cite{AM6999}. In the paper the developed semantics is applicable for a large class of operator based rough sets including specific cases of \textsf{RYS} \cite{AM240} and other less general approaches like \cite{CD3}. In \cite{CD3}, negation like operators are assumed in general and these are not definable operations relative order related operations/relation.

\subsection{Parthood in GOS}

It is necessary to clarify the nature of parthood even in set-theoretic structures like granular operator spaces.
The restriction of the parthood relation to the case when the first argument is a granule is particularly important. The theoretical assumption that \textsf{objects} are determined by their parts, and specifically by granules, may not reasonable when knowledge of the context is evolving. This is because in the situation:
\begin{itemize}
\item {granulation can be confounded by partial nature of information and noise,}
\item {knowledge of all possible granulations may not be possible and the chosen set of granules may not be optimal for handling partial information, and }
\item {the process has strong connections with apriori understanding of the objects in question.}
\end{itemize}

\begin{figure}[bht]
 \includegraphics[width=8.7cm,height=1.0cm]{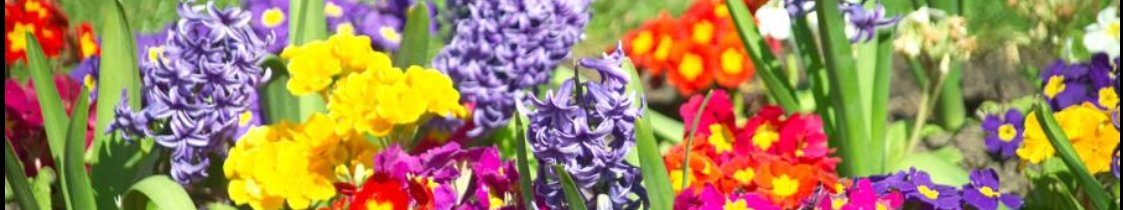}
 \caption{Which Classes and What Parts of Flowers?\small{( FreeFoto)}}
\end{figure}

Parthood can be defined in various ways in GOS. \emph{Not that they are always definable in terms of other predicates/operations, but that many are so definable - it is possible to do rough sets with parthood as a primitive relation}. In picking lavender in a garden as in Fig 1, agents  restrict themselves to classes approximating lavenders for example. Rough inclusion (Sec \ref{rin}) is a simpler example of parthood.  The following are more direct possibilities:
\begin{align*}
 \tag{Very Cautious} \pc ab \longleftrightarrow a^l \subseteq b^l \\
 \tag{Cautious} \pc ab \longleftrightarrow a^l \subseteq b^u \\
 \tag{Lateral} \pc ab \longleftrightarrow a^l \subseteq b^u\setminus b^l \\
 \tag{Possibilist} \pc ab \longleftrightarrow a^u \subseteq b^u \\
 \tag{Ultra Cautious} \pc ab \longleftrightarrow a^u \subseteq b^l \\
 \tag{Lateral+} \pc ab \longleftrightarrow a^u \subseteq b^u\setminus b^l \\
 \tag{Bilateral} \pc ab \longleftrightarrow a^u\setminus a^l \subseteq b^u \setminus b^l \\
 \tag{Lateral++} \pc ab \longleftrightarrow a^u\setminus a^l \subseteq b^l \\
 \tag{G-Simple} \pc ab \longleftrightarrow (\forall g\in \mathcal{G})(g\in x \longrightarrow g\in y)
 \end{align*}

All of these except for lateral++ are transitive concepts of parthoods that make sense in contexts as per availability and nature of information. \textsf{Very cautious} parthood makes sense in contexts in which cost of misclassification is high or the association between properties and objects is confounded by the lack of clarity in the possible set of properties. \textsf{G-Simple} is a version that refers granules alone and avoids references to approximations. 

\begin{theorem}
Lateral++ parthood is not a transitive and reflexive relation but is a strictly confluent relation (as it satisfies $\pc ab \& \pc a c \longrightarrow (\exists e) \pc be \& \pc ce $). In all other cases, in the above, $\pc$ is reflexive and transitive. Antisymmetry need not hold in general.
\end{theorem}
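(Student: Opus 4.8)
The plan is to split the statement into three tasks: confirm reflexivity and transitivity for the eight non-exceptional parthoods, treat \textsf{Lateral++} as the genuine exception (non-reflexive, non-transitive, but confluent), and exhibit a generic failure of antisymmetry. Every verification will use only the order behaviour of $l$ and $u$ already guaranteed in a \textsf{GOS}: the fundamental inclusion $a^l\subseteq a^u$, monotonicity of $l$ and $u$, and the fact that $\subseteq$ is itself reflexive and transitive. Keeping to these tools avoids any cardinality or membership-function input and so stays inside the low-contamination programme of the paper.

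I would first dispose of the definitions in which a single operator appears on both sides---\textsf{Very Cautious} ($a^l\subseteq b^l$), \textsf{Possibilist} ($a^u\subseteq b^u$), \textsf{Bilateral} ($a^u\setminus a^l\subseteq b^u\setminus b^l$) and \textsf{G-Simple}---since each has the form $f(a)\subseteq f(b)$ for a fixed set-valued $f$ and therefore inherits reflexivity and transitivity verbatim from $\subseteq$. For the remaining mixed-operator definitions the working tool is $a^l\subseteq a^u$: reflexivity is read off by setting $b=a$, and transitivity is obtained by chaining the two defining inclusions and bridging a lower against an upper approximation of the intermediate term, as in \textsf{Ultra Cautious} where $a^u\subseteq b^l\subseteq b^u\subseteq c^l$ yields $a^u\subseteq c^l$. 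This bridging, rather than any lattice identity, is where the approximation axioms are actually consumed, and it is the bookkeeping I would write out most carefully.

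The crux, and the step I expect to be the main obstacle, is \textsf{Lateral++}, $\pc ab\longleftrightarrow a^u\setminus a^l\subseteq b^l$. Non-reflexivity: $\pc aa$ says $a^u\setminus a^l\subseteq a^l$, and since $a^u\setminus a^l$ is disjoint from $a^l$ this forces $a^u=a^l$, so $\pc aa$ fails for every inexact $a$, and such an $a$ exists in any nondegenerate \textsf{GOS}. Non-transitivity: I would build a small witness---take $a$ with nonempty boundary $a^u\setminus a^l=B$ and an exact $b$ with $b^l=b^u=B$, so $\pc ab$ holds while $b^u\setminus b^l=\emptyset$ makes $\pc bc$ hold for every $c$; choosing $c$ with $c^l=\emptyset$ then defeats $\pc ac$. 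Confluence is the positive half: from $\pc ab$ and $\pc ac$ I must produce $e$ with $b^u\setminus b^l\subseteq e^l$ and $c^u\setminus c^l\subseteq e^l$, for which a definite majorant serves---taking $e=\underline{S}$, which is exact so $\underline{S}^l=\underline{S}$, makes $e^l$ contain both boundaries and both conjuncts hold. I would note that $\pc ab$ and $\pc ac$ are not even needed once such a definite $e$ is available, so the real content of confluence is the presence of a greatest definite object.

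Finally I would settle the antisymmetry clause by one example per shape: $\pc ab\,\&\,\pc ba$ forces only $a^l=b^l$ for \textsf{Very Cautious}, only $a^u=b^u$ for \textsf{Possibilist}, and only $a^u\setminus a^l=b^u\setminus b^l$ for \textsf{Bilateral}, none of which entails $a=b$. Two distinct sets sharing the relevant approximation data exist precisely because the maps $l,u$ are many-to-one, and such a pair witnesses the failure in each case, completing the three tasks.
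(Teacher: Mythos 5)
Your grouping of the single-operator cases (\textsf{Very Cautious}, \textsf{Possibilist}, \textsf{Bilateral}, \textsf{G-Simple} inherit reflexivity and transitivity from $\subseteq$) is correct, and your \textsf{Lateral++} analysis supplies details the paper waves off (its proof checks only \textsf{Bilateral} and then says other cases are ``similar'' and counterexamples are ``easy to construct''). The genuine gap is in your schematic treatment of the four mixed-operator cases, and it cannot be patched, because the claims fail there. Reflexivity ``read off by setting $b=a$'' via $a^l\subseteq a^u$ works only for \textsf{Cautious}. For \textsf{Ultra Cautious} it would need the \emph{reverse} inclusion $a^u\subseteq a^l$, which fails for every non-definite $a$; for \textsf{Lateral} and \textsf{Lateral+} it would need $a^l\subseteq a^u\setminus a^l$, resp.\ $a^u\subseteq a^u\setminus a^l$, each of which forces $a^l=\emptyset$ --- by exactly the disjointness argument you yourself deploy against reflexivity of \textsf{Lateral++}. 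Likewise your transitivity ``bridge'' runs through the middle term in the direction $b^l\subseteq b^u$, so it is available for \textsf{Ultra Cautious} ($a^u\subseteq b^l\subseteq b^u\subseteq c^l$) and for \textsf{Lateral+} ($a^u\subseteq b^u\setminus b^l\subseteq b^u\subseteq c^u\setminus c^l$), but not for \textsf{Cautious} or \textsf{Lateral}, where it would need $b^u\subseteq b^l$. \textsf{Cautious} is in fact not transitive: with equivalence classes $\{1,2\},\{3,4\},\{5,6\}$ (a \textsf{GOS}), take $a=\{1,2\}$, $b=\{1,3\}$, $c=\{5,6\}$; then $a^l=\{1,2\}\subseteq b^u$ and $b^l=\emptyset\subseteq c^u$, yet $a^l\nsubseteq c^u$. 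Replacing the middle set by $\{1\}$ kills \textsf{Lateral} the same way.

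So the steps you defer as ``bookkeeping I would write out most carefully'' are precisely the ones that collapse --- and they are also precisely the ones the paper's proof never writes out. An honest completion of your plan yields a corrected statement (both properties for the four single-operator parthoods; transitivity but not reflexivity for \textsf{Ultra Cautious} and \textsf{Lateral+}; reflexivity but not transitivity for \textsf{Cautious}; neither for \textsf{Lateral}), not the theorem as printed. Two smaller caveats: your confluence witness $e=\underline{S}$ presumes $\underline{S}^l=\underline{S}$, which is not a \textsf{GOS} axiom (the underlying conditions give only $\underline{S}^l\subseteq\underline{S}$), so the definite majorant must be extracted from the WRA and LS conditions (granules lie in $\underline{S}^l$ by LS, and upper approximations are terms in granules by WRA) rather than asserted; and the antisymmetry counterexamples need the space to actually contain two distinct sets with identical approximation data --- true in any approximation space with a non-singleton class, but worth exhibiting (e.g.\ $\{1\}$ and $\{2\}$ above) rather than attributing to $l,u$ being ``many-to-one'' in the abstract.
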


\begin{proof}
If $\pc$ is a bilateral relation as defined above, then $\pc aa$ because $a^u \setminus a^l = a^u \setminus a^l$.
If $\pc ab $ and $\pc bc$, then $a^u \setminus a^l \subseteq b^u \setminus b^l$ and $b^u \setminus b^l \subseteq c^u \setminus c^l$ so $a^u \subseteq a^l = c^u \setminus c^l$ that is $\pc ac$.
Similarly other cases can be verified. Counterexamples for lateral++ parthood are easy to construct.
\end{proof}

\section{Counting for Semantics}

If it is possible to count (in a generalized sense) a collection of objects in accordance with rules dependent on the relative types of objects then these may be useful for discovering new inferences about the collection - this idea has been used in \cite{AM240} in the context of the contamination reduction approach for actually deducing algebraic semantics for classical rough sets. The \emph{History based primitive counting} (\textsf{HPC}) method \cite{AM240} is a way of counting discernible and indiscernible objects in which full memory of objects being counted is retained over temporal progression. This does not lead to antichains, but the modification proposed in this section permits as much.

The collection being counted is taken to be $\{x_{1},\,x_{2},\,\ldots,\,x_{k},\,\ldots ,\,\}$ for simplicity and $R$ is a general indiscernibility over it. The relation with all preceding steps of counting is taken into account and when $R$ is symmetric, the following is the \textsf{HPC} algorithm: 

\begin{align*}
\text{Assign } f(x_{1}) =  1_{1} = s^{0}(1_{1}) \\
\text{If } f(x_{i}) = s^{r}(1_{j}) \,\&\, (x_{i},x_{i+1}) \in  R,\\
\text{ then assign } f(x_{i+1} = 1_{j+1} \\
\text{If } f(x_{i}) = s^{r}(1_{j}) \,\&\, (\forall k < i+1 ) (x_{k},x_{i+1}) \notin R,\\
\text{then let } f(x_{i+1}) = s^{r+1}(1_{j}) 
\end{align*}

For this section, it will be assumed that $S$ is a granular operator space, $\mathbb{S} \subseteq \wp(\underline{S})$, $\mathbb{R} \subset \mathbb{S}$ is the set of rough objects (some sense),
$\mathbb{C} \subseteq \mathbb{S}$ is the set of crisp objects and there exists a map $\varphi : \mathbb{R} \longmapsto \mathbb{C}^{2}$ satisfying $ (\forall x\in \mathbb{R})(\exists a, b\in \mathbb{C}) \varphi (x) = (a, b) \,\& \, a\subset b$,  $\# (\mathbb{S}) = n < \infty$, $\# (\mathbb{C}) = k$, $\mathbb{R\cap C} = \emptyset$ and $ \# (\mathbb{R}) = n-k < n$.

\begin{proposition}
Even though it is not required that $\varphi (x) = (a, b) \, \&\, x^l = a \, \&\, x^u = b$, $\mathbb{R}$ must be representable by a finite subset $\mathbb{K} \subseteq \mathbb{C}^2 \setminus \mathbb{C}$.   
\end{proposition}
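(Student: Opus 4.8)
The plan is to construct by hand a finite set $\mathbb{K} \subseteq \mathbb{C}^2 \setminus \mathbb{C}$ in bijective correspondence with $\mathbb{R}$, from which representability is immediate. The first thing to record is that the whole situation is finite: from $\# (\mathbb{S}) = n < \infty$ together with $\mathbb{R}, \mathbb{C} \subseteq \mathbb{S}$ we have $\# (\mathbb{R}) = n-k$ and $\# (\mathbb{C}) = k$, so $\mathbb{C}^2 \setminus \mathbb{C}$ has at most $k(k-1)$ members. Hence any injection of $\mathbb{R}$ into $\mathbb{C}^2 \setminus \mathbb{C}$ already delivers a finite representing subset, and nothing beyond this counting is needed for finiteness.

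Next I would check that the relevant pairs lie off the diagonal. Identifying $\mathbb{C}$ with the diagonal $\{(a, a) :\, a \in \mathbb{C}\} \subseteq \mathbb{C}^2$, the only clause of the hypothesis on $\varphi$ that is actually used is that $\varphi(x) = (a, b)$ with $a \subset b$; since this inclusion is proper it forces $a \neq b$, so $\varphi(x) \in \mathbb{C}^2 \setminus \mathbb{C}$ for every $x \in \mathbb{R}$. This is precisely the content of the proposition's disclaimer: the stronger matching requirement $x^l = a \,\&\, x^u = b$ plays no part in placing the image off the diagonal.

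The crux is to secure a faithful, i.e.\ injective, correspondence, and here I would deliberately avoid leaning on $\varphi$, exactly because it is not required to return the genuine approximations. Instead I would invoke the definition of rough object: a maximal roughly consistent object is determined by the pair $(x^l, x^u)$ common to all its members, and distinct rough objects yield distinct pairs. As $x^l$ and $x^u$ are definite, hence crisp, the canonical assignment $\psi :\, x \mapsto (x^l, x^u)$ embeds $\mathbb{R}$ injectively into $\mathbb{C}^2$, and since the inclusion $x^l \subset x^u$ is proper for a genuinely rough $x$ the image lands in $\mathbb{C}^2 \setminus \mathbb{C}$. Setting $\mathbb{K} = \psi(\mathbb{R})$ then gives the required finite representing set.

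The step I expect to be the main obstacle is exactly this injectivity claim, since it is sensitive to which notion of rough object is in force (the statement hedges with ``some sense''). Under the maximal roughly consistent reading of the earlier definition injectivity of $\psi$ is immediate; for a coarser reading I would instead have to argue that the chosen notion still separates objects by their approximation pairs, or else strengthen the hypothesis so that $\varphi$ itself is injective and $\mathbb{K} = \varphi(\mathbb{R})$ serves directly. The remaining steps are routine cardinality bookkeeping.
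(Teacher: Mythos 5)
Your first two paragraphs are, in essence, the whole proof, and they coincide with what the paper intends (the paper states this proposition without any proof, treating it as immediate from the standing assumptions of the section): take $\mathbb{K} = \varphi(\mathbb{R})$; it is finite because $\mathbb{R}$ (indeed all of $\mathbb{S}$) is finite, and it avoids the diagonal because each $\varphi(x) = (a,b)$ has $a \subset b$ properly, whence $a \neq b$. Nothing more is asserted: ``representable'' here does not mean ``in bijective correspondence with,'' so the injectivity you declare to be the crux is a requirement you have added to the statement, not one contained in it.

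The detour you build to secure that added requirement has genuine gaps, and it is worth seeing why it cannot be repaired inside this section's hypotheses. First, $\mathbb{C}$ is an abstractly given subset of $\mathbb{S} \subseteq \wp(\underline{S})$; neither $\mathbb{S}$ nor $\mathbb{C}$ is assumed closed under $l$ and $u$, so there is no warrant for $\psi(x) = (x^l, x^u)$ landing in $\mathbb{C}^2$ at all. Second, the claim that $x^l$ and $x^u$ are ``definite, hence crisp'' is false in general for a \textsf{GOS}: the paper introduces \emph{definite rough objects} via the extra conditions $A^{ll} = A^l \,\&\, A^{uu} = A^u$ precisely because idempotence of the approximation operators can fail. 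Third, since $\mathbb{R}$ is ``the set of rough objects (some sense),'' the identification of a rough object with its pair of approximations --- and hence the injectivity of $\psi$ --- is available only under one particular reading, as you yourself concede in your last paragraph. Note finally that the proposition's opening clause (``even though it is not required that $\varphi(x) = (a,b) \,\&\, x^l = a \,\&\, x^u = b$'') is exactly a warning that the representation must be obtained from $\varphi$ alone, without routing through the approximation operators; your preferred construction reinstates the very dependence the statement is designed to avoid. Keep the part you dismissed as routine bookkeeping --- that is the proof --- and discard the ``crux.''
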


\subsection*{Primitive Counting on Antichains(PCA) }

The following conveys the basic idea of primitive counting on antichains (better algorithms may be possible). Every parthood $\pc$ can be associated with $\pc$-antichain generated by the relation $R$ defined by $R ab$ if and only if $\neg \pc a b \,\&\,\neg \pc b a$.
\begin{itemize}
\item {Assign $x_1$ to category $C_1$ and set $f(x_1) = 1_1$.}
\item {If $\neg Rx_1x_2$ then assign $x_2$ to category $C_1$ and set $f(x_2) = 2_1$ else assign $x_2$ to a new category $C_2$ and set $f(x_2) = 1_2$.  }
\item {If $f(x_2) = 2_1$, $\neg Rx_1 x_3$ and $\neg R x_2 x_3$ then assign $x_3$ to category $C_1$ and set $f(x_3) = 3_1$. If $f(x_2) = 2_1$ and ($ Rx_1 x_3$ or $R x_2 x_3$) then assign $x_3$ to category $C_3$ and set $f(x_3) = 1_3$. If $f(x_2) = 1_2$ and $\neg Rx_1 x_3$ then assign $x_3$ to category $C_1$ and set $f(x_3) = 2_1$. If $f(x_2) = 1_2$ and ($ Rx_1 x_3$ and $\neg Rx_2 x_3$ then assign $x_3$ to category $C_2$ and set $f(x_3) = 2_2$ }
\item {Proceed Recursively under the following conditions:}
\item {No two distinct elements $a,\, b$ of a category $C_i$ satisfy $Rab$ }
\item {For distinct $i, j$, $(\forall a\in C_i)(\exists b\in C_j)\, Rab$}
\end{itemize}

\begin{theorem}
In the above context all of the following hold: 
\begin{itemize}
\item {Objects of category $C_1$ form a maximal antichain. }
\item {The objects in category $C_i$ would be enumerated by sequences of the form ($Q_i = \text{Card}(C_i)$)
$1_i, 2_i , \ldots , Q_i$ and $\sum Q_i = n$.}
\item {Objects of each category forms an antichain and each $x_i$ belongs to exactly one of the categories.}
\end{itemize}
\end{theorem}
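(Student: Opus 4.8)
The plan is to treat the construction as an induction on the step index $i$ at which $x_i$ is processed, carrying three invariants: (a) the categories produced so far partition $\{x_1,\ldots,x_i\}$; (b) every category is an antichain for the incomparability relation $R$; and (c) $C_1$ is a maximal antichain of $\{x_1,\ldots,x_i\}$. Before starting I would record the elementary facts about $R$ on which everything rests: since in the present context $\pc$ is reflexive and transitive (by the earlier theorem on parthoods), the relation $R ab \Longleftrightarrow \neg\pc ab\,\&\,\neg\pc ba$ is symmetric and irreflexive, so that ``antichain'' means a set of pairwise $R$-related objects and no object is $R$-related to itself.

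First I would dispose of the two bookkeeping assertions. Determinism and totality of the assignment rule give that $f$ sends each $x_i$ to exactly one category, which is precisely the ``exactly one of the categories'' claim and simultaneously establishes invariant (a). The label attached on insertion runs through $1_i, 2_i, \ldots$ in order of arrival into $C_i$, so the members of $C_i$ are enumerated by $1_i,\ldots,Q_i$ with $Q_i=\mathrm{Card}(C_i)$; since the categories partition a set of $n$ objects, summing cardinalities yields $\sum_i Q_i = n$. Invariant (b) then follows from the insertion rule itself: an object is placed into a category only when it is $R$-related to every member already present there, and because categories only grow and $R$ is symmetric, every pair eventually sharing a category is $R$-related, i.e. each category is an antichain.

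The substantive point, and the step I expect to be the main obstacle, is the maximality of $C_1$. Here I would argue by persistence. When $x_{i+1}$ is processed, either it is $R$-related to every current member of $C_1$, in which case the rule (which gives priority to $C_1$) adjoins it and $C_1$ stays a maximal antichain with one more element; or $x_{i+1}$ is comparable (that is, $\neg R$) to some member $y$ of $C_1$, in which case it is routed elsewhere and $C_1\cup\{x_{i+1}\}$ already fails to be an antichain on account of $y$. The key observation is that comparability to a member of $C_1$ is preserved as $C_1$ grows, because $C_1$ never loses elements; hence any object excluded from $C_1$ stays non-adjoinable to the final $C_1$. Running this over all steps shows no object outside $C_1$ can be added without destroying the antichain property, so invariant (c) is maintained and $C_1$ is a maximal antichain of the whole collection.

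Finally I would point to the separation property built into the recursion — for distinct categories every element of one is comparable to some element of the other — which certifies that no two categories could have been merged into a single antichain, and which is exactly why the greedily built $C_1$ is \emph{maximal} rather than merely an antichain. The one place demanding care is that $\pc$ need not be antisymmetric (antisymmetry need not hold in general), so two distinct but roughly equal objects are \emph{not} $R$-related and therefore never share a category; this does not disturb any of the three conclusions, but it must be kept in mind when reading ``incomparable'' throughout the argument.
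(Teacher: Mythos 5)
In substance your proof follows the same route as the paper's own (very brief) argument: the bookkeeping claims are read off from the construction, and $C_1$ is maximal because every object kept out of $C_1$ fails the admission test against some member of $C_1$ that never subsequently leaves it. Your contribution is to make the induction and the persistence of exclusion explicit; that part is fine, as is your closing observation that failure of antisymmetry of $\pc$ is harmless.

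But there is a discrepancy you have passed over silently, and it matters because it is a defect of the paper itself. The paper defines $Rab$ if and only if $\neg \pc ab \,\&\, \neg\pc ba$ (incomparability), and its algorithm admits $x_2$ into $C_1$ when $\neg R x_1 x_2$ holds, maintaining the invariant that no two distinct elements of a category satisfy $Rab$. Read literally, a category is then a set of pairwise \emph{comparable} elements --- the opposite of an antichain --- and the theorem as stated would be false. Your insertion rule (``an object is placed into a category only when it is $R$-related to every member already present'') is the \emph{negation} of the paper's; likewise your separation property (``every element of one category is comparable to some element of the other'') negates the paper's written condition $(\forall a\in C_i)(\exists b\in C_j)\, Rab$. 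Your reading is the one under which the theorem is true --- equivalently, one can keep the algorithm verbatim but define $R$ as comparability, $Rab$ if and only if $\pc ab \vee \pc ba$ --- and it is evidently what was intended; the paper's own proof betrays the same tension when it says the objects of $C_1$ ``are mutually compared'' and then concludes that $C_1$ is a maximal antichain. So your argument does establish the intended statement by essentially the paper's method, but a careful write-up must state this correction to the definition of $R$ (or to the algorithm) explicitly, rather than prove a theorem about an algorithm other than the one on the page.
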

\begin{proof}
Most of the proof follows from the construction.  All objects of the category $C_1$ are mutually compared and objects not in $C_1$ are $R$-related to at least one object in $C_1$. So $C_1$ is a maximal antichain.  
\end{proof}

\subsection*{History Aware Primitive Counting on Antichains (HPCA) }

In this variation of \textsf{PCA}, objects will be permitted to belong to multiple categories - with the aspect being reflected in the end result of the counting process. This allows for a enumeration for a maximal antichain decomposition. The steps for this method of counting are as follows:

\begin{itemize}
\item {Assign $x_1$ to category $C_1$ and set $f(x_1) = 1_1$.}
\item {If $\neg Rx_1x_2$ then assign $x_2$ to category $C_1$ and set $f(x_2) = 2_1$ else assign $x_2$ to a new category $C_2$ and set $f(x_2) = T_2$.  }
\item {If $f(x_2) = 2_1$, $\neg Rx_1 x_3$ and $\neg R x_2 x_3$ then assign $x_3$ to category $C_1$ and set $f(x_3) = 3_1$. If $f(x_2) = 2_1$ and ($ Rx_1 x_3$ or $R x_2 x_3$) then assign $x_3$ to category $C_3$ and set $f(x_3) = T_3$. }
\item {For $j>2$, if $f(x_j) = k_1$ for $k\leq j$, $\neg Rx x_{j+1}$ for all $x \in C_1$ ($C_1$ at this step) then assign $x_{j+1}$ to category $C_1$ and set $f(x_{j+1}) = {k+1}_1$. If $f(x_j) = k_1$ and ($ Rx x_{j+1}$ for at least an $x\in C_1$ at this stage) then assign $x_{j+1}$ to category $C_{j-k+1}$ and set $f(x_{j+1}) = T_{j+1}$.}
\item {Continue till all objects have been covered for the construction of $C_1$. The only function of $T_j$ is in locating the start point for the next step. }
\item {Stopping Condition: $\cup C_i = S$ (at this stage) and for $i\neq j$ $C_i \nsubseteq C_j$ or all start points are exhausted. }
\item {Find the argument for $f$ for which $j$ in $T_j$ is a minimum}
\item {Assign $x_j$ to the category $C_2$ and set $f(x_j) = 1_2$. }
\item {Proceed as for $x_1$, but over the set \begin{equation}S^{(j)} = \{x_j, x_{j+1}, \ldots , x_n, x_1, x_2, \ldots, x_{j-1} \}\end{equation} to derive all the elements of $C_2$. Continue till stopping condition is satisfied. }
\end{itemize}

\begin{theorem}
\textsf{HPCA} may possibly yield a decomposition of $S$ into maximal antichains $C_1, C_2, \ldots ,C_q $ or just a collection of maximal antichains $C_1, C_2, \ldots ,C_q $ satisfying \begin{equation}\bigcup C_i \subseteq S.\end{equation}
\end{theorem}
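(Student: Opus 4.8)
The plan is to reduce the statement to the already-established behaviour of \textsf{PCA} and then read off the two clauses of the stopping condition. First I would note that the construction of $C_1$ under \textsf{HPCA} repeats verbatim the \textsf{PCA} construction, so by the preceding theorem $C_1$ is a maximal antichain: no two distinct $a,b\in C_1$ satisfy $Rab$, while every object outside $C_1$ is $R$-related to at least one member of $C_1$, so $C_1$ admits no proper enlargement. The genuinely new ingredient is the re-initialisation: after $C_1$ is closed, \textsf{HPCA} selects the argument minimising the index $j$ in $T_j$, sets that object as $1_2$, and \emph{proceeds as for $x_1$} over the cyclically reordered set $S^{(j)} = \{x_j, x_{j+1}, \ldots, x_n, x_1, \ldots, x_{j-1}\}$. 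Since the same comparison rules (via $R$) are applied to $S^{(j)}$, the \textsf{PCA} argument transfers unchanged, and I would conclude by induction on the category index that every $C_1, C_2, \ldots, C_q$ produced is a maximal antichain.

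Next, the containment $\bigcup C_i \subseteq S$ holds unconditionally: each label $k_i$ or $T_j$ is attached to some $x_m \in S$, so no category contains anything outside $S$ and a union of subsets of $S$ remains a subset of $S$. Termination is not in question either, because the collection being counted is finite ($n < \infty$), which forces both the categories and the supply of start points to be finite. This already secures the second disjunct of the statement in every run of the algorithm.

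The content of the disjunction therefore lies in \emph{whether} the union exhausts $S$, and here I would appeal directly to the stated stopping condition, which halts the process either when $\bigcup C_i = S$ at the current stage or when all start points are exhausted. In the first mode every object has been placed in at least one of the maximal antichains, yielding a genuine (possibly overlapping) decomposition $S = \bigcup C_i$ into maximal antichains. In the second mode the start points recorded by the $T_j$ markers may run dry while some objects remain unlabelled, so only $\bigcup C_i \subseteq S$ can be guaranteed. To make the word \emph{possibly} precise I would exhibit one small finite poset on which the run terminates by full coverage and a second on which it terminates by exhaustion with a residual uncovered object, confirming that both alternatives are realisable.

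The main obstacle I anticipate is the converse half of the last paragraph, namely establishing that the exhaustion branch can actually leave objects uncovered. Because \textsf{HPCA} permits an object to belong to several categories and the markers $T_j$ record only a tentative next start point, one must argue carefully that the maximal antichains generated from the available start points need not collectively cover $S$; controlling the interaction between multiple membership and the cyclic re-indexing of $S^{(j)}$ is the delicate point. By contrast, the maximality and antichain property of each individual $C_i$ follow routinely once the \textsf{PCA} argument is transported to $S^{(j)}$.
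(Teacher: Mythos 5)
Your proposal is correct and takes essentially the same route as the paper: the paper's own (much terser) proof likewise attributes the possible failure of $\bigcup C_i = S$ to the enumeration order restricting which maximal antichains can be found --- your exhaustion-of-start-points analysis is the same phenomenon, which the paper immediately formalizes as \emph{HPCA coherence} --- and it likewise defers the examples with the remark that ``counter examples are easy to construct.'' Your explicit verification that each $C_i$ is a maximal antichain by transporting the \textsf{PCA} argument to $S^{(j)}$, and the trivial containment $\bigcup C_i \subseteq S$, simply make explicit what the paper leaves implicit.
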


\begin{proof}
The order structure initially assumed on $S$, restricts the maximal antichains that may be found by the \textsf{HPCA} algorithm. 

It is possible that the order in which the discernible objects are found restricts the discernibility of objects found subsequently. So many maximal antichains are bound to be excluded. Counter examples are easy to construct.
\end{proof}

The above proof motivates the following definition:
\begin{definition}
A total order $<$ on $S$ will be said to be \emph{HPCA coherent} if and only if the \textsf{HPCA} procedure generates a set of maximal antichains $\{C_i : i = 1, 2, \ldots ,q\}$ such that $\cup C_i = S$. 
\end{definition}

\subsection*{Full History Based Counting on Antichains (FHCA)}

In this method the steps shall be the same as for \textsf{HPCA} if the order is HPCA coherent. If not, then

\begin{itemize}
\item {Store the maximal antichain $C_1$ as $A_1$. }
\item {Permute the order on $S$ by the permutation $\sigma_1$. }
\item {Compute $C_1$ as per the HPCA algorithm for the new order and store it as $A_2$ - the elements of $A_2$ being numbered as for $C_2$, that is in the form $1_2, 2_2, \ldots , Q_2$. }
\item {Stop if $\cup A_i = S$}
\item {Else permute the order on $S$ by a permutation $\sigma_2$ and repeat the above steps to obtain $A_3$ - the elements of $A_3$ being numbered as for $C_3$, that is in the form $1_3, 2_3, \ldots , Q_3$.}
\item {Proceed till stopping criteria is satisfied.}
\end{itemize}

\begin{theorem}
Let the set of maximal antichains obtained by the FHCA method be $\mathbb{A}$. $\mathbb{A}$ need not be associated with a HPCA coherent order $<$. 
\end{theorem}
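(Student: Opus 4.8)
The plan is to read the statement as a non-existence claim and to settle it by an explicit finite counterexample: I would produce a granular operator space $S$ (equivalently, a finite quasi-ordered set of rough objects under one of the parthood relations introduced above) together with a concrete run of the \textsf{FHCA} procedure whose output family $\mathbb{A}$ is not the \textsf{HPCA} output of any single \textsf{HPCA} coherent order. The conceptual engine is already visible in the proof of the preceding theorem: \textsf{HPCA} on one fixed order is sequentially constrained --- $C_1$ is the greedy leading antichain, and every later $C_i$ is forced to be seeded at the minimal surviving $T_j$ start point and then swept greedily --- so ``the order in which the discernible objects are found restricts'' the whole family. By contrast \textsf{FHCA} builds each $A_i$ as the leading antichain $C_1$ of a \emph{different} permuted order $\sigma_{i-1}$, so its members are assembled from mutually independent sweeps. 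The obstruction I will exploit is exactly this independence.

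First I would fix a small poset possessing several maximal antichains that pairwise overlap in incomparable elements, and check that its natural order is not \textsf{HPCA} coherent, so that \textsf{FHCA} genuinely invokes permutations. Next I would choose the permutations $\sigma_1, \sigma_2, \ldots$ so that the leading antichains $A_1, A_2, \ldots$ they return (i) satisfy $\bigcup A_i = S$, so that \textsf{FHCA} does terminate with a cover, and (ii) carry within-antichain enumerations $1_i, 2_i, \ldots, Q_i$ that are mutually incompatible on shared elements; for instance two incomparable elements $p,q$ lying in both $A_i$ and $A_j$ but enumerated as $p$ before $q$ in one and $q$ before $p$ in the other. Since a single total order induces one relative order on $\{p,q\}$ and feeds that same order into every antichain it generates, no coherent order can reproduce both enumerations at once.

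The verification then splits into two parts. The routine part is (i): confirming $\bigcup A_i = S$ from the construction, which follows as in the \textsf{PCA}/\textsf{HPCA} theorems since each $A_i$ is a genuine maximal antichain by the maximality stopping rule. The main obstacle is (ii): ruling out \emph{every} coherent order, a universal statement over all total orders on $S$. I would discharge it either by keeping $S$ small enough that the finitely many essentially distinct \textsf{HPCA} runs can be enumerated and none is seen to yield $\mathbb{A}$, or --- more robustly --- by isolating a structural invariant that every single-order \textsf{HPCA} family must satisfy but $\mathbb{A}$ violates, namely the existence of one global linear order simultaneously consistent with all within-antichain enumerations and with all forced restart transitions $C_i \to C_{i+1}$. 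Exhibiting an $\mathbb{A}$ that admits no such global linearization completes the argument.

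The genuinely delicate point, and the place where care is needed, is the precise reading of the \textsf{HPCA} restart-and-stopping semantics (which element becomes the next seed, and when start points count as exhausted); I would state the reading being used explicitly before the verification, because the invariant in (ii) depends on it. I also note an alternative shortcut: if the stopping condition is read so that some poset admits \emph{no} coherent order at all while \textsf{FHCA} still covers $S$ by permuting, then the conclusion is immediate on that poset. The explicit labelled counterexample above is, however, the more informative and self-contained route, and it is the one I would write up.
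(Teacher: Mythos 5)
First, a point of reference: the paper states this theorem \emph{without any proof} (it is followed immediately by the Applications subsection), so there is no internal argument to compare against; your proposal has to be judged on its own merits. On those merits there is a genuine gap, in fact two. The first is that what you have written is a plan for a counterexample rather than a counterexample: the entire content of the theorem is the existence of a specific $S$, a specific \textsf{FHCA} run, and a verification that no coherent order reproduces its output, and none of these three ingredients is actually exhibited. The brute-force fallback you mention (take $S$ small, enumerate all total orders, check each \textsf{HPCA} run) is sound in principle, but it is precisely the part you defer.

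The second and more serious gap is that your ``more robust'' route rests on a false invariant. You claim that a single total order ``feeds that same order into every antichain it generates,'' so that two antichains sharing an incomparable pair $p,q$ with opposite internal enumerations cannot both arise from one coherent order. But under the paper's \textsf{HPCA} semantics, the antichain $C_2$ (and all later ones) is built by sweeping the \emph{rotated} sequence $S^{(j)} = \{x_j, x_{j+1}, \ldots , x_n, x_1, \ldots, x_{j-1}\}$, not the original order. A pair $p, q$ that straddles the rotation point has its relative order reversed in the later sweep: if $p = x_1$, $q = x_5$ and the restart seed is $x_3$, then $C_1$ enumerates $p$ before $q$ while $C_2$ enumerates $q$ before $p$. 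So the two-element flip you propose as the obstruction is perfectly realizable by a single coherent order and rules out nothing. The invariant that actually survives rotation is consistency of all enumerations with one fixed \emph{cyclic} order on $S$, so an enumeration-based obstruction needs at least three shared elements realizing incompatible cyclic orders (say $p,q,r$ in one antichain and $p,r,q$ in another), not a transposed pair. You do flag the restart semantics as the delicate point, which is the right instinct, but the argument as stated depends on the reading that the semantics refutes. Finally, note that the theorem speaks of $\mathbb{A}$ as a \emph{set of maximal antichains}; if only the antichains as sets must be matched, any enumeration-based argument is beside the point, and the counterexample must be verified at set level --- which again forces you back to the explicit finite check you have not carried out.
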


\subsection{Applications}

The novel application scenarios to which the developed methodology apply qualify as parts of the inverse problem \cite{AM240}. Based on some understanding of properties and their connection with objects (this need not be clearly understood beforehand), suppose an agent specifies approximations and indiscernibilities of objects. In the usual way of applying rough sets, approximations are discovered from concrete information systems/tables. But here the origin can be very dense.

With the approximations and indiscernibilities of objects, it would be certainly possible to count the objects in various orders in the senses mentioned in this section. Using this it would be possible to predict whether the information has been derived from a rough perspective or not. Algebraic semantics may also be derivable if admissible. More details relating to this would be part of a forthcoming paper.

If rough sets are interpreted from the perspective of knowledge (see \cite{AM909,PPM2} and references therein), definite objects represent exact concepts, while others are not exact. Non definite objects may be approximated by \emph{lower definite objects} that correspond to the part of knowledge that is definitely contained in the object.   
For this to be valid it is necessary that for the object $A$ in question, $A^{ll}= A^l$ and $A^{lu}=A^l $ hold.
Similarly when $A$ is approximated by its upper approximation $A^u$, the latter corresponds to the possible concepts that may correspond to $A$. For this interpretation to be valid, it is again necessary that $A^{uu} = A^u$ hold. In general, the approximations $A^l, A^u$ may not correspond to exact knowledge and many generalized concepts of exact knowledge may be definable/relevant. In fact, \textsf{GOS} permits encoding knowledge that can be successively refined on the \emph{possible knowledge} aspect for every choice of parthood. The present paper also motivates the problem of characterization of the correspondence between the general counts and rough ontology and the structures analogous to antichains associated with generalized transitive parthoods. Specifically, the algebras corresponding to \textsf{FHCA} and \textsf{HPCA} are of interest.

\bibliographystyle{IEEEtran}
\bibliography{../bib/biblioam2016xx.bib}
\end{document}